\documentclass{article}

\usepackage[preprint]{neurips_2026}

\usepackage[utf8]{inputenc}
\usepackage[T1]{fontenc}
\usepackage{hyperref}
\usepackage{url}
\usepackage{booktabs}
\usepackage{amsfonts}
\usepackage{amsmath}
\usepackage{amssymb}
\usepackage{amsthm}
\usepackage{nicefrac}
\usepackage{microtype}
\usepackage{xcolor}
\usepackage{graphicx}
\usepackage{array}
\usepackage{multirow}
\usepackage{tabularx}
\usepackage{enumitem}

\newtheorem{theorem}{Theorem}

\newtheorem{proposition}[theorem]{Proposition}
\newtheorem{remark}[theorem]{Remark}

\newcommand{\Acc}{\operatorname{Acc}}

\newcommand{\MP}{M_{\mathrm{P}}}
\newcommand{\MR}{M_{\mathrm{R}}}

\newcommand{\CGC}{\operatorname{CGC}}

\title{Contextual Agentic Memory is a Memo, \\ Not True Memory}

\author{
  Binyan Xu$^{1,*}$,
  Xilin Dai$^{2, }\thanks{Equal contribution.}  $,
  and Kehuan Zhang$^{1}$ \\[4pt]
  $^{1}$The Chinese University of Hong Kong, Hong Kong, China \\
  $^{2}$Zhejiang University, Hangzhou, China \\
  \texttt{\{binyxu,khzhang\}@ie.cuhk.edu.hk, xilin2023@zju.edu.cn}
}

\begin{document}

\maketitle

\begin{abstract}
Current agentic memory systems primarily persist post-deployment experience in
external state through vector stores, retrieval, scratchpads, and context
management rather than by updating the generator's persistent adaptive state.
These mechanisms can support sophisticated inference and are often preferable
for scoped, changing, inspectable, or deletable information.
\textbf{We argue that treating improvement of this augmented system as
internalized learning conflates two different state transitions.}
Under an explicit, measurable bounded-contextual-accuracy condition, we derive
a compositional sample-complexity gap between retrieval coverage and an
appropriate parametric learner; the result abstains when contextual inference
already reaches the target.
We distinguish memory-assisted gains from the Frozen State invariant
$\theta_t=\theta_0$, and identify a conditional security path by which
automatic persistent writes can carry a transient injection across sessions.
Complementary Learning Systems theory motivates a hybrid architecture:
scoped episodic retrieval as the operating substrate and a separately governed
adaptation channel when durable transfer warrants its cost and risks.
We state the limitations of this proposal, including forgetting, poisoning,
unlearning, reproducibility, and the absence of a compute-matched
frontier-model validation.
\end{abstract}

\section{Introduction}
\label{sec:intro}

Modern LLM agents remember by writing.
MemGPT pages information in and out of context like an OS~\citep{packer2023memgpt};
Generative Agents record every observation in a memory stream~\citep{park2023generative};
Reflexion stores verbal self-critiques in an episodic buffer~\citep{shinn2023reflexion};
Voyager accumulates skills as code in a vector database~\citep{wang2023voyager}.
Prominent agentic frameworks treat retrieval as the default persistence
mechanism~\citep{hu2025memoryage}, and improvements in the augmented system are
often discussed as growth in the agent's memory.

The pivot to retrieval was not irrational: external stores are reversible, auditable,
and safe to deploy---genuinely valuable properties under production constraints.
Our argument is not that retrieval was the wrong choice to ship; it was the right choice to ship.
Operational practice can nevertheless conflate the best deployable engineering
substitute with persistent learning. Retrieval commonly supplies exemplars,
facts, rules, or code to a frozen generator; a durable update changes the
adaptive state that processes future inputs.
Agents whose only persistent update is retrieval can accumulate useful external
knowledge without changing context-independent competence; under a measurable
bounded-ICL condition they face a compositional sample-complexity gap, and
automatic memory writes can turn transient prompt injections into persistent ones.
A Reflexion agent that accumulates thousands of verbal self-critiques still runs
the same generator at every session; its external resources grow while its
persistent adaptive state does not~\citep{shinn2023reflexion}.

\textbf{We argue that this conflation is a category error: treating a memo as a mind.}
Cognitive science distinguishes \emph{exemplar-based} cognition (generalizing by
similarity to stored cases) from \emph{rule-based} cognition (applying abstract
principles extracted from, but no longer dependent on, those
cases)~\citep{nosofsky1994rule,ashby2011human}.
The transition between the two is, in the brain, the consolidation of hippocampal
episodes into neocortical weights during sleep~\citep{mcclelland1995cls,oreilly2014cls}.
\citet{chi1981categorization} showed that physics novices categorize problems by
surface features (``inclined plane problems'') while experts categorize by deep
structural principles (``conservation of energy problems'')---a reorganization that is
not a change in what they have \emph{stored}, but in how knowledge is
\emph{represented}.
Most deployed agents emphasize the hippocampal half; governed consolidation
paths remain uncommon and experimentally immature.

The stakes are not merely philosophical.
Developers should distinguish improvement of the memory-augmented system from
growth in context-independent competence, while accounting for the additional
attack surface created by persistent writes.
Benchmark designers who evaluate only memory-assisted recall may miss a
context-independent transfer gap on genuinely novel inputs.
The continual learning community offers complementary methods for this
setting~\citep{cossu2022continual}.
None of these communities can course-correct without first recognizing that retrieval
and memory are not interchangeable.

We advance four claims.
\textbf{First (Definitional)}, external memory persists acquired artifacts
without by itself changing the generator's adaptive state; the composite system
may still extrapolate through in-context inference.
\textbf{Second (Structural)}, we prove a conditional Generalization Gap theorem:
when a frozen contextual system retains a positive held-out error margin,
retrieval has a higher sample requirement than an appropriate parametric learner.
\textbf{Third (Dynamic)}, agents operating exclusively via $C$-engineering do not
change their context-independent competence; each session begins from the same
frozen weights even if memory-assisted performance improves.
\textbf{Fourth (Security)}, when untrusted content has a direct persistent-write
path, a transient prompt injection can affect later sessions.
We close with a co-existence architecture and a call to action for system builders,
benchmark designers, and the continual learning community.
We use \emph{memo} for acquired external state and \emph{internalized memory}
for a durable change in adaptive state; either substrate may participate in
rule-based computation when its contents are available.

\vspace{-0.1cm}
\section{The Case: Agentic Memory Is a Memo, Not True Memory}
\vspace{-0.1cm}
\label{sec:background}

\begin{table}[t]
\caption{Memory taxonomy for LLM agents.
Many deployed memory systems emphasize the episodic row.
The experiential row denotes durable adaptation from the agent's own
post-deployment experience.}
\vspace{-0.25cm}
\label{tab:taxonomy}
\centering
\small
\begin{tabular}{lllll}
\toprule
\textbf{Type} & \textbf{Substrate} & \textbf{Persists} & \textbf{Updated by} & \textbf{Generalizes} \\
\midrule
Working & Context window & Session only & Token generation & Limited by $L$ \\
Episodic & External store & Cross-session & Read/write ops & Exemplar-based \\
Semantic & Model weights & Permanent & Pre-training & Rule-based \\
Experiential & Adaptive state & Durable/scoped & FT/CL/TTT & Learned transfer \\
\bottomrule
\end{tabular}
\vspace{-0.5cm}
\end{table}

Table~\ref{tab:taxonomy} maps the landscape. The experiential row remains
uncommon in widely deployed agents, whose operational default is to persist
experience in external stores~\citep{hu2025memoryage}.
This default is understandable because reliable continual updating remains
unsolved; our claim is that an engineering substitute for durable learning
should not be evaluated as though it were the learning mechanism itself.

\textbf{A unifying lens: two structurally distinct paths.}
As an analytical lens, techniques that change what an LLM agent outputs can
be classified by which state they update:
(1)~\textbf{Change adaptive state $\theta$}: modify base weights, persistent
adapters, or learned policies via pre-training, fine-tuning, reinforcement
learning, or another durable update;
(2)~\textbf{Change $C$}: inject content into the context window via
prompting, RAG, MCP tool calls, skill files, scratchpads, or any other form
of context engineering, conditioning generation on $P(X \mid \theta, C)$.
These two paths are \emph{structurally different}, not merely different in
implementation.
Changing $\theta$ compresses knowledge into the model's weight space, whose
capacity scales with the number of parameters.
Changing $C$ compresses knowledge into text, whose capacity is bounded by
the context window length $L$.
The critical distinction is persistence, not whether the resulting computation
can be generative: a frozen model may infer or execute an abstract rule supplied
through $C$, whereas a change to adaptive state remains available when that
acquired artifact is withheld.
Representative episodic systems (MemGPT, RAG, Reflexion, Voyager) primarily
change $C$, not the generator's persistent adaptive state.

\textbf{The Experience Compression Spectrum.}
A recent line of work~\citep{zhang2026experience} formalizes memory,
skills, and rules as lying on a single \emph{experience compression spectrum},
differing only in their compression ratio: raw traces (low compression,
high fidelity) $\to$ natural-language skills (medium compression,
actionable) $\to$ parameterized rules (high compression, generalizable).
Many systems instantiate multiple points on this spectrum as context-resident
artifacts; the state-based question is whether a post-deployment experience
also changes a persistent adaptive mechanism.
Context-resident skills and rules can generalize when retrieved and executed,
but their acquired capability remains contingent on that external artifact.
The spectrum framing makes the normative claim precise:
\emph{the appropriate substrate depends on the compression level} ---
episodic traces belong in an external store (fast, temporary);
skills can live in context or weights (bridge);
durable, context-independent rules require a persistent adaptive substrate
(weights, adapters, or another learned policy).
The category error is to treat persistence in $C$ as equivalent to a durable
change in that adaptive state.

\textbf{What agentic memory actually does.}
In the common MemGPT/RAG write--retrieve loop:
(1) during a past session, something was written to an external store;
(2) during the current session, a query is issued and similar entries are
returned;
(3) the returned entries are injected into context.
This is external-state persistence. The frozen generator can reason
sophisticatedly over the retrieved artifact, but the experience has not changed
its persistent adaptive state.

\textbf{The diary contrast.}
A person who records a lesson in a diary can retrieve it only if they
consult the diary and the situation is similar enough to trigger recall.
A person who has internalized the lesson has it available everywhere.
Agentic memory is the former; weight-based learning is the latter.
Reflexion calls its approach ``verbal reinforcement learning,'' yet the
model's weights remain unchanged; Voyager stores skills as code yet novel
composition still falls back on the frozen base model; Generative Agents
write synthesized insights to the memory stream yet the reflecting model
is the same frozen model every session.

\textbf{The RAG inversion.}
\citet{lewis2020rag} introduced RAG explicitly framing it as
\emph{augmenting} parametric memory with a retrieval index.
Agent systems have since emphasized expanded non-parametric stores because they
are deployable and controllable. The next section asks when this operational
substitute ceases to be sufficient for durable acquisition.

\textbf{The existence proof.}
ROME localized factual associations to specific MLP layers~\citep{meng2022rome};
MEMIT edited thousands of facts via weight modification~\citep{meng2023memit};
ParamMem showed empirically that encoding agent reflections into weights
outperforms storing them externally~\citep{yao2026parammem}.
These methods establish update primitives, not a solved frontier-scale
consolidation system. The question is what is lost when acquired knowledge
lives only in an external store.

\textbf{Cognitive science grounding.}
Complementary Learning theory~\citep{mcclelland1995cls,oreilly2014cls}
identifies the neural substrate of this distinction: the hippocampus provides
fast episodic storage; the neocortex encodes slow, distributed, rule-based
representations consolidated during sleep~\citep{brodt2023sleep}.
\citet{chi1981categorization} showed this reorganization is not a change in
what novices have \emph{stored} but in how knowledge is \emph{represented}.
The AI agent community has invested primarily in the episodic
side~\citep{hu2025memoryage}; reliable consolidation remains comparatively
underdeveloped.

\section{Evidence: Four Structural Limitations}
\label{sec:evidence}

\subsection{Definitional: External Persistence Is Not Internalization}
\label{sec:definitional}

The distinction between external persistence and internalized adaptation is
not merely terminological.
A lookup table maps seen inputs to stored outputs.
A function maps \emph{any} input to a principled output, including inputs
it has never encountered, by virtue of the abstract structure it has internalized.
Human experts are functions; novices with textbooks are lookup tables~\citep{chi1981categorization}.
When a chess grandmaster encounters a novel position, they do not retrieve
the most similar position from memory and copy its move; they reason from
deeply internalized principles that their experience has encoded.
When no retrieved document provides a direct answer, success depends on the
frozen model's contextual inference or on retrieving a reusable rule; the store
itself has not internalized the new composition.

Retrieval commonly accesses stored cases by similarity, while the frozen
generator may also infer or execute abstract principles supplied in context.
Internalized adaptation concerns principles extracted from, but no longer
dependent on, those supplied artifacts.
An external store by itself provides no mechanism for making a newly acquired
rule persist independently of the artifact that states or demonstrates it.
Section~\ref{sec:generalization} discusses directional evidence from
\citet{yao2026parammem}, alongside task-dependent comparisons that favor
contextual methods in other regimes.

\subsection{Structural: The Generalization Gap}
\label{sec:generalization}

The central theoretical contribution is a conditional account of when
$C$-engineering has a higher sample requirement than $\theta$-learning on
compositionally novel inputs.

\textbf{A sample-complexity lens.}
We analyze both memory paradigms through \emph{compositional sample
complexity}: given the same budget of domain experience, how well does
each system generalize to \emph{unseen combinations} of known concepts?
This framing avoids the circularity of defining ``what retrieval cannot
do'' and then proving it cannot; instead, it establishes a
\emph{quantitative efficiency gap} under transparent, empirically
testable assumptions.

Let $\mathcal{F} = \{f_1, \ldots, f_k\}$ be the base concepts in a
target domain, and let $\oplus : \mathcal{F} \times \mathcal{F} \to
\mathcal{Y}$ be the domain-specific \emph{composition operator} that maps
concept pairs to correct task outputs.
The agent observes a training set $D$ of $n$ labeled compositional
examples $(f_i, f_j, \oplus(f_i, f_j))$; the same data is available
to both the retrieval store and the fine-tuning pipeline, ensuring a
symmetric comparison.
Let $\MP$ denote \emph{parametric memory} (fine-tuned weights $\theta^*$)
and $\MR$ denote \emph{retrieval-based memory} (a store $D$ queried by a
frozen model $f_{\theta_{\mathrm{frozen}}}$ with top-$K$ retrieval).

A concept pair $(f_i, f_j)$ is \emph{compositionally novel} w.r.t.\ experience $D$ if no entry in $D$ records $\oplus(f_i, f_j)$; let $S \subseteq \binom{\mathcal{F}}{2}$ be the pairs covered by $D$ and $N = \binom{k}{2}$ the total.
The intuition is immediate: a student who has studied kinematics and
thermodynamics separately faces such a novel question when asked to combine both---retrieval returns prior instances in isolation, so the composition rule must be stored exhaustively or learned parametrically.

\textbf{Assumption~1}\label{asm:icl} (Bounded in-context composition).
We assume that the frozen model $f_{\theta_{\mathrm{frozen}}}$, given any $K$ demonstrations of $\oplus$ from $D$, achieves accuracy $\alpha(K)\leq\bar{\alpha}<1$ on held-out pairs---as holds whenever $\oplus$ was absent or underrepresented at pretraining (post-cutoff clinical protocols, enterprise legal conventions, specialized engineering standards).
When $\oplus$ is broadly general, $\bar{\alpha}\to 1$ and the separation vanishes; the assumption binds precisely in the domain-specific deployments where persistent agents are most valuable.

This assumption is directly testable on held-out pairs.
Appendix~\ref{app:fano} gives an exact margin for unstructured operators and,
via Fano's inequality~\citep{cover2006elements}, a structured-class margin when
the operator family has non-vanishing prediction-space separation.
If the complete contextual system already reaches the target accuracy, the
assumption is non-binding and the theorem makes no separation claim.

We measure generalization via the \emph{Compositional Generalization
Capacity} $\CGC(M,D)$: accuracy of memory system $M$ over all concept
pairs drawn uniformly from $\binom{\mathcal{F}}{2}$.

\begin{theorem}[Compositional Sample Complexity Separation]
\label{thm:gap}
Under Assumption~\ref{asm:icl}, for any target generalization level
$1 - \delta$ with $\delta < 1 - \bar{\alpha}$ and confidence
$1-\beta$:
\begin{enumerate}[leftmargin=*,nosep]
\item \textbf{Retrieval lower bound.}\;
Any retrieval-based memory $\MR$ with frozen model
$f_{\theta_{\mathrm{frozen}}}$ requires
\[
n_R \;\geq\; \frac{1-\delta-\bar{\alpha}}{1-\bar{\alpha}}
\;\binom{k}{2}
\;=\; \Omega(k^2)
\]
stored compositional examples to achieve
$\CGC(\MR, D) \geq 1 - \delta$.
\item \textbf{Parametric upper bound.}\;
If $\oplus$ belongs to a realizable hypothesis class $\mathcal{H}$ with
complexity $d$ (VC dimension for binary outputs; Natarajan dimension for
multiclass outputs), an appropriate learner using
\[
n_P \;=\; O\!\left(
\frac{d\log(1/\delta)+\log(1/\beta)}{\delta}
\right)
\]
compositional examples achieves
$\CGC(\MP, D) \geq 1 - \delta$ with probability at least $1-\beta$.
\item \textbf{Separation.}\;
For fixed $\delta,\beta$ and a uniform positive contextual margin
$(1-\delta-\bar{\alpha})/(1-\bar{\alpha})$, the sample complexity ratio
satisfies $n_R / n_P = \Omega\bigl(k^2 / d\bigr)$.
For structured operators with $d = O(k)$, the ratio is $\Omega(k)$;
for operators with $d = O(1)$ (e.g.\ parameterized group operations),
it is $\Omega(k^2)$.
\end{enumerate}
Consequently, whenever $n_P\leq n<n_R$, the stated parametric learner can
meet the target while every retrieval system in the bounded-contextual
class remains below it. The family-level asymptotic claim requires the
positive margin and confidence parameters to remain uniform as $k$ grows.
\end{theorem}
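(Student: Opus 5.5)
The proof treats the three parts separately and then combines them. The retrieval bound is a counting argument over $\binom{\mathcal{F}}{2}$, the parametric bound is a standard PAC-learning result, and the separation is their ratio.

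\textbf{Retrieval lower bound.} We split the $N=\binom{k}{2}$ pairs into the covered set $S$ and the novel set $\binom{\mathcal{F}}{2}\setminus S$, and bound each part's contribution to $\CGC(\MR,D)$.
\begin{itemize}[leftmargin=*,nosep]
\item On covered pairs we credit $\MR$ with accuracy at most $1$, since the stored label can be retrieved verbatim.
\item On novel pairs no entry of $D$ records $\oplus(f_i,f_j)$. Whatever top-$K$ set is retrieved, the frozen model sees at most $K$ demonstrations of $\oplus$ and is queried on a held-out pair. Assumption~\ref{asm:icl} caps its accuracy there at $\alpha(K)\le\bar{\alpha}$.
\end{itemize}
Averaging uniformly over pairs gives
\[
\CGC(\MR,D)\le \frac{|S|}{N}+\Bigl(1-\frac{|S|}{N}\Bigr)\bar{\alpha}.
\]
Requiring this to be at least $1-\delta$ and solving for $|S|/N$ gives $|S|\ge \frac{1-\delta-\bar{\alpha}}{1-\bar{\alpha}}N$. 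Each stored example covers at most one pair, so $n_R\ge|S|$ and the claimed bound follows. It is $\Omega(k^2)$ precisely because the margin is positive, which is where $\delta<1-\bar{\alpha}$ is used.

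The one subtlety is that Assumption~\ref{asm:icl} must apply to an \emph{arbitrary} retrieval policy and every novel pair, not just on average. I would read the assumption as a worst-case-over-demonstration-sets bound on held-out pairs and state that reading explicitly. If only an average version holds, the same computation still goes through in expectation over the novel pairs.

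\textbf{Parametric upper bound.} I would invoke the realizable PAC bounds directly.
\begin{itemize}[leftmargin=*,nosep]
\item For binary outputs, the realizable VC bound for an ERM (consistent) learner gives error $\le\delta$ with probability $1-\beta$ from $O((d\log(1/\delta)+\log(1/\beta))/\delta)$ samples.
\item For multiclass outputs, I would use the Natarajan-dimension analogue. The realizable bound there carries an extra $\log|\mathcal{Y}|$ factor, which I would absorb into the constant or into $d$. The phrase ``appropriate learner'' licenses choosing a learner, such as a one-inclusion or compression-based learner, that achieves the stated rate.
\end{itemize}
The distribution is uniform over $\binom{\mathcal{F}}{2}$ and training pairs are drawn i.i.d.\ from it. Under those conditions the PAC error coincides with $1-\CGC(\MP,D)$. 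I would state the i.i.d.\ sampling assumption explicitly, since the theorem leaves it implicit.

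\textbf{Separation.} Hold $\delta,\beta$ fixed and assume the margin $(1-\delta-\bar{\alpha})/(1-\bar{\alpha})$ is bounded below uniformly in $k$. Then $n_R=\Omega(k^2)$ and $n_P=O(d)$, so $n_R/n_P=\Omega(k^2/d)$. Substituting $d=O(k)$ and $d=O(1)$ gives the two special cases. The ``consequently'' clause follows by comparing the two thresholds: for $n_P\le n<n_R$ the parametric learner meets the target while the retrieval bound forbids it.

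\textbf{Main obstacle.} I expect the main difficulty to lie not in the algebra but in making the retrieval step rigorous. Two issues need care. First, the uniform cap $\bar{\alpha}$ must be justified for adaptively chosen retrieval sets. Second, the bound must treat every retrieval system in the class; a retrieval system that also exploits stored examples indirectly, beyond the $K$ demonstrations, should fall outside the class by definition. The multiclass constant is a secondary point, deferred to the appendix's Fano-based margin.
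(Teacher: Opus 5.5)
Your proposal is correct and follows the paper's proof essentially step for step. It uses the same covered/novel split giving $\CGC(\MR,D)\le \frac{|S|+(N-|S|)\bar{\alpha}}{N}$, a direct appeal to the realizable VC/Natarajan PAC bound for the parametric side, and the ratio of the two under a uniform margin. Your bookkeeping $n\ge|S|$, where the paper simply sets $|S|=n$ and ignores duplicate draws, and your explicit per-query reading of Assumption~\ref{asm:icl} tighten that same argument rather than giving a different route.
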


\begin{proof}[Proof sketch]
For a stored pair $(f_i,f_j)\in S$, retrieval answers correctly.
For a novel pair, the frozen model processes $K$ retrieved examples
but by Assumption~\ref{asm:icl} achieves accuracy at most $\bar{\alpha}$,
giving $\CGC(\MR,D)\leq n/N+(1-n/N)\bar{\alpha}$; solving for
$\CGC\geq 1-\delta$ yields the $\Omega(k^2)$ bound on $n_R$.
Under the premise, retrieval faces a coverage term over the $N$-sized space,
whereas the constructive parametric upper bound depends on the target class
complexity $d$. If contextual inference removes the positive margin, this
comparison no longer separates the systems.
Full proof and a constructive modular-arithmetic example are in
Appendix~\ref{app:gap_proof}.
\end{proof}

\begin{remark}[Scope and context dependence]
\label{rem:scope}
Theorem~\ref{thm:gap} derives its ratio from the explicit, measurable
Assumption~\ref{asm:icl}. Increasing $K$, improving retrieval, or inducing a
rule in context can raise $\bar{\alpha}$ substantially; once contextual accuracy
reaches the target $1-\delta$, the lower bound is non-binding.
If a positive margin remains, changing the context-window size alone does not
alter the finite bound.
Learned retrieval systems~\citep{memrl2026,memento2025} are classified by what
persists: selecting better exemplars changes contextual performance, whereas a
policy trained from experience is itself an adaptive learning channel.
Appendix~\ref{app:fano} supplies sufficient operator-family conditions for a
positive margin, rather than asserting that every structured task satisfies one.
\end{remark}

\textbf{Mechanistic support.}
The sample-complexity separation is independently supported by
mechanistic interpretability.
\citet{meng2022rome} show factual associations can be localized to specific
MLP layers, and \citet{geva2021transformer} characterize FFN layers as
key-value memories, demonstrating that parametric storage is a viable
complement to external retrieval.
\citet{yao2024knowledge} further localize knowledge to \emph{fact memory
units} in FFN neurons, and show that standard SFT \emph{does not}
modify these units: SFT updates only attention routers (how to access
existing knowledge), leaving knowledge itself untouched.
\citet{ye2025analyzing} quantify this: 90\% of SFT parameter updates
contribute nothing to knowledge enhancement.
This reinforces the call for targeted consolidation
(Section~\ref{sec:cfa}) that updates knowledge circuits directly.

\textbf{Empirical evidence.}
Existing comparisons show task-dependent complementarity rather than a
universal winner.
\citet{ovadia2024finetuning} find RAG particularly effective for factual
injection and show that fine-tuning requires the appropriate objective;
\citet{yang2026multihop} report substantial RAG gains for temporally novel
multi-hop evidence and the highest overall accuracy for supervised fine-tuning.
\citet{mosbach2023fewshot} likewise find similar out-of-domain generalization
for data- and model-matched few-shot fine-tuning and ICL.
Most directly, \citet{yao2026parammem} compared agents that store reflective
experience externally versus parametrically: parametric storage outperforms
external storage, with the gap \emph{growing} on tasks requiring
transfer to held-out question types (exactly the regime predicted by
Theorem~\ref{thm:gap}).
SCAN~\citep{lake2018generalization} and COGS~\citep{kim2020cogs} expose the
underlying compositional challenge, while prompting methods can substantially
raise contextual performance on some splits; these results determine whether
Assumption~\ref{asm:icl} binds rather than establishing a universal substrate
ranking.

\textbf{A complementary capacity example.}
Theorem~\ref{thm:gap} concerns compositional novelty.
A stylized case in Appendix~\ref{app:bound} illustrates the separate problem
that a retriever cannot expose more than $K$ task-specific facts when success
requires every fact and none can be reconstructed;
``Lost in the Middle''~\citep{liu2023lostmiddle} and
\citet{paulsen2025effectivecontext} provide empirical corroboration.
unlike Theorem~\ref{thm:gap}, this example is directly mitigated by larger
effective context, compression, or better selection.

\subsection{Dynamic: The Frozen Novice Problem}
\label{sec:novice}

Theorem~\ref{thm:gap} describes a static property of retrieval-based memory.
The frozen novice problem names a state invariant: for agents that operate
exclusively via $C$-engineering, $\theta_t=\theta_0$ across sessions.
The memory-augmented system may improve substantially; what does not change is
competence when the acquired trace or rule is not supplied.

Every session begins with the same frozen weights; the agent is permanently
doing \texttt{.predict(C)}, never \texttt{.train()}.
No matter how many experiences are logged to the external store, the
weights encoding the agent's composition rules remain those of the
original pre-trained model.

The most robust finding in cognitive science is that expertise does not emerge
from accumulating more examples, but from a structural reorganization of
knowledge~\citep{chi1981categorization,bransford2000people}.
Novices represent domains by surface features (``this looks like a problem with
an inclined plane'');
experts represent domains by deep structural principles (``this is a
conservation-of-energy problem'').
This reorganization requires weight changes in the brain: the formation of
generalized, distributed representations in the neocortex through repeated
consolidation of hippocampal traces~\citep{mcclelland1995cls,brodt2023sleep}.


An agent that accumulates experience only in an external store does not make
this transition in its generator.
Each session uses the same model with a larger database, although retrieval of
better traces, summaries, rules, or code can improve assisted performance.

The MemGPT team acknowledge that ``simply appending raw experience is a poor
approximation of learning'' and propose ``sleep-time compute'' as a
remedy~\citep{packer2023letta}---but their consolidation rewrites context
tokens, not weights.
Compressing text in an external store produces better-formatted notes;
the agent is still a well-organized novice.
The AI analog of CLS consolidation is offline fine-tuning on distilled
agent experience~\citep{brodt2023sleep}: the moment the agent's experience
changes what the model \emph{is}, not merely what it \emph{has written down}.
In the language of Theorem~\ref{thm:gap}, new episodes increase coverage and
may also improve contextual inference. The conditional lower bound remains
only while the complete frozen system retains a positive error margin on
held-out compositions.

\subsection{Security: Persistent Compromise}
\label{sec:security}

\textbf{The all-inputs-are-evil principle.}
Classic security engineering holds that well-designed systems treat all
external inputs as potentially adversarial~\citep{saltzer1975protection}.
For LLM agents this is particularly acute: \citet{greshake2023indirect}
showed that when agents retrieve web pages, emails, or documents, embedded
instructions can silently hijack behavior (the line between data and
instructions is invisible to the model).
This prompt injection threat is already well-documented.
Persistent write access adds a distinct cross-session propagation path.

\textbf{The evil$^2$ compounding effect.}
Without persistent memory an injection is typically session-scoped.
When an agent automatically writes and later retrieves the injected content,
a one-time behavioral hijack ($\text{evil}^1$) can become a persistent one
($\text{evil}^2$).
The empirical evidence is striking.
MINJA~\citep{dong2025minja} achieved a 98.2\% injection success rate with
injected instructions persisting across sessions at minimal utility cost;
PoisonedRAG~\citep{zou2024poisonedrag} shows that five adversarial texts
per targeted query achieve 90\% attack success against a knowledge base of
millions of entries;
and InjecAgent~\citep{zhan2024injecagent} benchmarks indirect injection
across 30 LLM agents using 17 user-tool categories, finding memory-writing
agents systematically more vulnerable than stateless ones.

\textbf{Compromise accumulation.}
Let $p_w$ be the per-interaction probability that untrusted content both
reaches a persistent store and remains eligible for later retrieval, and let
$N(t)$ count interactions with external content up to time $t$.
Under a stylized independent-trials model with no detection or deletion, the
probability of at least one persistent poisoned write by time $t$ is
$1-(1-p_w)^{N(t)}$, which approaches one for fixed $p_w>0$.
This is a conditional accumulation model, not a universal deployment-risk
estimate: provenance checks, write gating, retrieval filtering, and deletion
all reduce or interrupt the path.

\textbf{Conditional attack-path asymmetry}\label{prop:attack}.
When ordinary untrusted content has a direct path to persistent $C$, a single
query-time injection can affect later sessions without crossing a training
gate. A parametric compromise must cross a separately exposed update
path~\citep{meng2022rome}; once consolidation exists, however, poisoned traces
can cross that path as well, potentially with a larger blast radius and harder
example-level unlearning.
External entries have important defensive advantages: they can be inspected,
scoped, filtered, ignored, or deleted without changing shared weights.
The proposition therefore identifies different reachability conditions, not a
universal safety ranking between substrates.
The end-to-end threat model must also cover adjacent attack surfaces: agents with
visual interfaces inherit universal, transferable, and targeted perturbations
generated from a single public CLIP surrogate~\citep{xu2025one}, while parametric
consolidation introduces training-time risks such as potent low-poison-rate
clean-image backdoors~\citep{xu2026breaking}.
Defenses can screen poisoned training data with CLIP guidance~\citep{xu2025clip}
and decouple test-time safety from suspect parameters through external VLM
auditing~\citep{xu2026internal}.
This security argument motivates Section~\ref{sec:cfa}'s hybrid design:
scoped, deletable retrieval for raw experience and a separately gated
consolidation path for durable adaptation.

\section{Alternative Views}
\label{sec:alternatives}

We identify four credible alternatives and use them to delimit the position.

\textbf{Alternative 1: Context windows will grow large enough to close the gap.}
Longer windows address the capacity constraint (Appendix~\ref{app:bound}) and
can also improve compositional inference by exposing more demonstrations.
In the language of Theorem~\ref{thm:gap}, they can raise $\bar{\alpha}$; if the
complete contextual system reaches target accuracy, the theorem abstains.
Our narrower claim is that when a positive held-out margin remains, increasing
window size alone does not change persistent adaptive state or remove the
conditional coverage bound.
\citet{paulsen2025effectivecontext} show effective context utilization saturates
at ${\approx}20$k tokens even for 128k-context models.
Exposing context state can still improve how well agents use this bounded resource:
a proprioceptive dashboard that surfaces token usage, recency, and access history
enables agents to manage long-horizon context more effectively~\citep{xu2026llm}.
This strengthens the episodic layer, but it does not make its updates persistent
in model weights.

\textbf{Alternative 2: In-context learning already implements rule extraction.}
Large models can use pretrained meta-rules to induce a task-specific operator
that was not itself present during pretraining.
Least-to-most prompting, decomposed semantic parsing, Skills-in-Context, and
ICL-oriented compositional training demonstrate this
ability~\citep{zhou2023leasttomost,drozdov2022compositional,chen2024skic,
abedsoltan2025task}.
These methods can raise $\bar{\alpha}$ to the theorem's non-binding regime.

\citet{akyurek2022icl} and \citet{dai2023icl} show that under linear
self-attention, ICL implicitly implements gradient descent.
If so, one might argue ICL already constitutes weight-based learning.
The state distinction remains: the implicit update is ephemeral, and acquired
competence disappears when the demonstrations or induced rule are withheld.
Our performance claim applies only where measured contextual induction remains
below the target; our persistence claim applies regardless of how capable that
induction is.

\textbf{Alternative 3: Memory, skills, and rules lie on a single
compression spectrum, so the distinction is unimportant.}
\citet{zhang2026experience} unify memory, skills, and rules as points on
an experience compression spectrum, differing only in compression ratio.
One might argue this undermines our $C$/$\theta$ distinction.
We reply: the spectrum framing \emph{supports} our position.
Different compression levels need not map uniquely to substrates:
natural-language rules or programs can generalize perfectly while resident in
context.
Voyager~\citep{wang2023voyager} illustrates this: skills reside in $C$,
but the composition rule combining them must come from $\theta$;
the acquired composition remains contingent on retrieving those artifacts.
Likewise, distilling multi-agent procedures into a single-agent skill can reduce
coordination overhead, but its benefit depends on the freedom of the evaluation
metric~\citep{xu2026multi}; such context-resident skills are therefore a useful
bridge, not a replacement for parametric consolidation.

\textbf{Alternative 4: Learned-retrieval systems that optimize what to retrieve
can close the generalization gap without weight updates.}
Recent systems such as MemRL~\citep{memrl2026} and Memento~\citep{memento2025}
improve how experience is selected or reused, demonstrating sustained gains
without updating the base generator.
A fixed retrieval policy changes contextual performance; a policy updated from
experience is itself a persistent learning channel and belongs in the adaptive
row of Table~\ref{tab:taxonomy}.
For any compositionally novel pair $(f_i, f_j) \notin S$, even the optimal
policy provides $K$ exemplars, and Assumption~\ref{asm:icl} bounds the
frozen model's composition accuracy at $\bar{\alpha} < 1$;
the $\Omega(k^2)$ coverage requirement therefore persists.
Such systems may also infer reusable rules and generalize to novel compositions,
thereby raising $\bar{\alpha}$ and narrowing or eliminating the bound.
Oracle-conditioned episodic retrieval~\citep{latentlearning2025} confirms
this: ideal retrieval unlocks abilities \emph{already encoded in pretrained
weights}---the $\bar{\alpha}\to 1$ boundary where the separation narrows,
consistent with our framework.
Learned retrieval is therefore both a strong episodic mechanism and, when its
policy persists, a boundary form of adaptation; the useful classification asks
which state changed rather than whether the product is called memory.

\section{Call to Action}
\label{sec:cfa}

The limitations we document are not inevitable.
Agentic memory and parametric learning are complementary, not competing:
the right architecture combines fast episodic lookup with a separately
governed channel that can encode selected experience into persistent adaptive
state.
We address three communities with one clear ask each.

\subsection{For System Builders: Build the Consolidation Channel}

Better retrieval remains valuable. For applications that require durable
post-deployment adaptation, the additional step is a second pathway from the
episodic store to persistent adaptive state.
Three design principles follow from our analysis.

\textbf{(1) Agentic memory is episodic lookup---treat it as such.}
Vector stores and RAG are the right tools for recent context, tool outputs,
source-grounded evidence, rapidly changing facts, and user- or task-scoped
state. These artifacts can be inspected, reproduced, filtered, or deleted
without changing shared behavior.
Retrieved rules and programs can generalize; the narrower limitation is that
the acquired capability remains contingent on supplying those artifacts.

\textbf{(2) Durable adaptation requires a learning channel.}
When an application requires behavior learned after deployment to remain
available without the originating artifact, the architecture needs a pathway
from episodic experience to persistent adaptive state.
The specific mechanism---periodic fine-tuning, knowledge editing,
test-time training, self-distillation from
traces~\citep{lee2026skillSD}---is a design choice.
What matters is that the pathway exists and runs asynchronously, so the
agent operates from agentic memory while consolidation proceeds in the
background (exactly as biological sleep does not interrupt wakefulness).
Candidate building blocks (LoRA~\citep{hu2022lora},
SSR~\citep{huang2024ssr}, MEMIT~\citep{meng2023memit},
TTT layers~\citep{sun2024tttlayers},
Nested Learning~\citep{behrouz2025nested},
Skill-SD~\citep{lee2026skillSD}) already exist, but reliable frontier-scale
continual updating is not solved.
Retrieval should remain the operating substrate until a candidate update passes
selection, retention, security, and regression checks; future models may need
to be pretrained or architected specifically for safe adaptation.

\textbf{(3) Consolidation must be safe.}
An obvious objection: a consolidation pipeline that ingests poisoned traces
converts a $C$-level attack into a $\theta$-level compromise.
Weight poisoning may be harder to inspect, debug, or reverse than a poisoned
database entry and can have a larger blast radius.
The two substrates therefore need different controls rather than a presumed
safety ordering.
The consolidation pipeline therefore requires trace provenance (so the
origin of every distilled experience is auditable), versioned checkpoints
(so any bad consolidation can be rolled back), and regression guards
(so consolidation is blocked when utility, retention, or adversarial metrics
degrade). Raw, user-specific, temporary, and rapidly obsolete information
should remain in scoped retrieval state; durable updates should be quarantined
in scope-specific adapters that can be disabled, superseded, or rolled back.
Episode-level trace representations also support operational risk controls:
trace-economic underwriting maps tool-use traces to exposure and loss for
pricing and intervention~\citep{xu2026agent}.
These controls are requirements, not guarantees; making them reliable at scale
is an open research problem.

\textbf{What is genuinely hard.}
The hard questions span both mechanism and policy:
which experiences are worth consolidating (not all traces carry
generalizable signal),
when to consolidate (too early risks overfitting to noise; too late
accumulates stale traces),
how to validate transfer rather than trace memorization, how to prevent
catastrophic forgetting and recency bias, how to scope or unlearn obsolete
updates, and how to preserve reproducibility and certification.
Compute is also part of the comparison: consolidation pays training,
optimizer-state, validation, storage, and rollback costs, whereas retrieval
pays indexing, longer prompts, and repeated inference-time reasoning costs.
These questions define the research frontier for system builders.

\subsection{For Benchmark Designers: Measure Learning, Not Recall}

Current agentic memory benchmarks
(LoCoMo~\citep{wu2026memoryera}, LongMem) emphasize how well agents
\emph{recall} past events.
They provide limited evidence about context-independent transfer from past
experience.
A system that accumulates a better-organized filing cabinet can score
near-perfectly on recall benchmarks while exhibiting zero genuine
learning---and the field has no way to detect this.

\textbf{The field is optimizing the wrong metric.}
Benchmark designers have focused on recall quality, context window
utilization, and retrieval accuracy.
These measure \emph{lookup quality}, not \emph{learning quality}.
The result is a benchmark ecosystem that rewards better filing cabinets
while saying nothing about whether agents develop expertise.
We propose two redirections.

\textbf{Redirection 1: Compositional Generalization over Time (CGT).}
The single most important metric the field is missing:
\emph{does an agent's ability to handle novel concept combinations
improve with experience?}
Concretely, operate an agent on a domain for $T$ sessions, exposing it
to concepts only in isolation.
After $T$ sessions, evaluate on queries requiring combining two or more
concepts in ways not seen during operation.
Compare four arms built on the same base model: episodic retrieval,
retrieved-rule summaries, consolidation, and a hybrid.
Report both normal memory-assisted accuracy and transfer when the specific
acquired trace or rule is withheld, while matching total examples, training and
inference compute, tokens, latency, and stored bytes.
The performance hypothesis is falsified in a domain if a fixed-state contextual
system consistently matches an updateable system on held-out compositions
under this matched budget.

\textbf{Redirection 2: Beyond SFT evaluation.}
Mechanistic studies suggest that some SFT updates primarily alter access
patterns rather than the targeted fact-memory
units~\citep{yao2024knowledge,ye2025analyzing}.
Benchmarks should therefore distinguish representation change from improved
access instead of treating every SFT gain as durable knowledge acquisition.
Future benchmarks should evaluate knowledge update via targeted weight
editing, circuit-aware fine-tuning, and self-distillation from agent
traces---methods that modify the representations where knowledge actually
resides.

\textbf{What not to do.}
Do not equate ``memory capacity'' with context-window length alone.
Longer context can improve assisted performance, but it does not change
persistent adaptive state and only avoids Theorem~\ref{thm:gap} when it raises
contextual accuracy to the target.
Nor should benchmark designers aggregate raw scores across heterogeneous domains
without calibration: evidence from ML peer review shows that a fixed numerical
scale can become non-comparable across research areas~\citep{xu2026reviewer}.
Prioritize generalization over capacity.

\subsection{For the Continual Learning Community: The Agentic Setting Is Your Deployment Target}

The agentic setting offers a natural deployment opportunity for continual
learning methods~\citep{cossu2022continual}, while exposing their unresolved
stability, scoping, and validation problems.

Agentic systems provide what continual learning methods need:
a natural experience stream with reward labels (task success or
failure), a clear generalization criterion (compositional novelty),
and immediate practical value (deployed agents that should improve
over time).
Continual learning investigates machinery that agentic systems need:
the consolidation machinery (replay, regularization, progressive
expansion) intended to convert episodic experience into durable knowledge
while mitigating, but not eliminating, catastrophic forgetting.

The research agenda that follows is concrete.
\emph{Experience selection}: not all agent traces carry generalizable
signal; the CL community's work on coreset selection and influence
functions is directly applicable to deciding what to consolidate.
\emph{Consolidation scheduling}: the trade-off between consolidation
frequency and stability mirrors the online-vs-batch trade-off in
continual learning; existing scheduling
theory~\citep{kirkpatrick2017ewc} applies.
\emph{Validation}: how to verify that consolidation produced genuine
compositional generalization rather than rote memorization of
distilled examples is an open empirical question that the CL community
is uniquely positioned to answer.
The agentic memory ecosystem is not a competing paradigm; it is the
fast-learning half waiting for its slow-learning complement.

\section{Related Work}
\label{sec:related}

Widely used agent memory architectures emphasize $C$-engineering:
MemGPT~\citep{packer2023memgpt}, Generative Agents~\citep{park2023generative},
Reflexion~\citep{shinn2023reflexion}, and Voyager~\citep{wang2023voyager} treat
retrieval as the default persistence mechanism; more recent systems
(MemoryBank~\citep{zhong2024memorybank}, A-MEM~\citep{xu2025amem},
mem0~\citep{chhikara2025mem0}) continue the same paradigm with richer indexing.
Surveys~\citep{zhang2024survey,hu2025memoryage,wu2026memoryera} catalog this
space. Prompting work demonstrates that contextual systems can perform genuine
compositional induction~\citep{zhou2023leasttomost,drozdov2022compositional,
chen2024skic,abedsoltan2025task}; these results delimit, rather than contradict,
our conditional theorem.
Empirically, \citet{ovadia2024finetuning}, \citet{yang2026multihop}, and
\citet{mosbach2023fewshot} support task-dependent complementarity between
retrieval, ICL, and fine-tuning rather than a universal winner.
\citet{yao2026parammem} provide directional evidence for parametric reflective
memory on held-out transfers, but not a universal compute-matched comparison.
RETRO and kNN-LM~\citep{borgeaud2022retro,khandelwal2019generalization} cover
static corpora but not post-deployment accumulation;
\citet{yu2026selfconsolidate} and MemOS~\citep{li2025memos} realize variants
of consolidation channels.

On security, \citet{greshake2023indirect}, MINJA~\citep{dong2025minja},
PoisonedRAG~\citep{zou2024poisonedrag}, and
InjecAgent~\citep{zhan2024injecagent} document injection attacks against
memory-augmented agents; Proposition~\ref{prop:attack} isolates a conditional
persistent-write path rather than ranking substrate safety.
\citet{zhang2026experience} unify memory, skills, and rules as an experience
compression spectrum; we add a distinction based on which acquired state
persists.
Boundary cases~\citep{latentlearning2025,memrl2026,memento2025} correspond
to the $\bar{\alpha}\to 1$ regime of Assumption~\ref{asm:icl} where
Theorem~\ref{thm:gap}'s separation narrows as predicted;
\citet{lee2026skillSD} independently implement the Trace$\to$Skill$\to$FT
pipeline we propose.
Mechanistic interpretability~\citep{meng2022rome,meng2023memit} and continual
learning methods~\citep{kirkpatrick2017ewc,huang2024ssr,hu2022lora} supply
candidate tools, while their safe frontier-scale integration remains open.

\section{Scope and Limitations}
\label{sec:limitations}

The performance separation is conditional on measured contextual accuracy and
on an appropriate parametric learner for the target class; it is not a
universal ranking of ICL and fine-tuning.
We do not provide a compute-matched frontier-model experiment demonstrating the
Frozen Novice effect, and existing empirical comparisons are task-dependent.
Nor do we demonstrate reliable large-scale consolidation: forgetting, recency
bias, contamination, task and user interference, unlearning, reproducibility,
and certification remain substantial barriers.
The security analysis compares attack paths, not absolute safety.
The performance component of our position would be falsified in any domain
where a fixed-state contextual system consistently matches an updateable system
on held-out post-deployment compositions under the same experience and resource
budget. In such a domain, retrieval may be entirely sufficient even though the
two systems still update different state.

\section{Conclusion}
\label{sec:conclusion}

Agentic memory persists acquired external state. A frozen model can reason
abstractly over that state, but the acquired capability remains contingent on
supplying it.
Under an explicit bounded-contextual-accuracy condition, this creates a
\textbf{Generalization Gap}; across sessions it creates a \textbf{Frozen
State} distinction between assisted performance and context-independent
competence; and, when untrusted content can be written persistently, it creates
a \textbf{security compounding} path.
We therefore advocate pairing scoped episodic retrieval with a separately
governed adaptation channel when durable transfer warrants its cost and risk.
These systems are becoming better filing systems---often highly useful
ones---while durable internalization remains a different operation.

The position we have staked has immediate methodological consequences.
System builders should investigate a governed adaptation channel from agent
experience to persistent learned state.
Benchmark designers should measure \textbf{Compositional Generalization over
Time}: does performance on novel concept combinations improve with experience?
The continual learning community should re-engage the agentic setting, which
provides the experience stream and success criterion their methods require.
The experience compression spectrum~\citep{zhang2026experience} helps make the
allocation explicit: scoped and reversible information belongs naturally in
$C$, while context-independent behavioral adaptation requires a persistent
learning substrate.

\bibliographystyle{plainnat}
\bibliography{references}

\appendix

\section{A Stylized Context-Capacity Example}
\label{app:bound}

\textbf{Setup.}
Let $\mathcal{T}_m$ be the class of $m$-hop chain reasoning tasks: given
query $q = (e_0, r_1, \ldots, r_m)$ and fact base
$\mathcal{F} = \{(e_{i-1}, r_i, e_i)\}_{i=1}^m$, the agent must return $e_m$.

\textbf{Assumptions.}
Suppose each relation is task-instance-specific, cannot be reconstructed from
the other relations or pretrained knowledge, and must be available for a
correct answer. Suppose further that retrieval inserts at most $K$ atomic
relations into context.
Then for $m>K$, at least one necessary relation is absent and this restricted
retrieval system cannot guarantee a correct answer.
\citet{liu2023lostmiddle} document positional degradation in long contexts,
while \citet{paulsen2025effectivecontext} motivate measuring effective rather
than nominal context.

\textbf{Interpretation.}
An adaptive system that has correctly encoded the required relation structure
need not retrieve every atomic fact, but this example does not prove that
fine-tuning will learn that structure.
Larger windows, lossless compression, iterative retrieval, or a rule supplied
in context can also remove the gap. The example is therefore a capacity
boundary case, not a universal substrate ordering.

\section{Proof of Theorem~\ref{thm:gap} (Compositional Sample Complexity Separation)}
\label{app:gap_proof}

\textbf{Full setup.}
Let $\mathcal{F} = \{f_1, \ldots, f_k\}$ be the set of base concepts in
a target domain, and $\oplus: \mathcal{F} \times \mathcal{F} \to \mathcal{Y}$
the domain-specific composition operator.
Let $N = \binom{k}{2}$ denote the number of distinct unordered concept pairs.
Both systems observe the same training data
$D = \{(f_{i_t}, f_{j_t}, \oplus(f_{i_t}, f_{j_t}))\}_{t=1}^n$,
drawn uniformly from the $N$ possible pairs.
Let $S \subseteq \binom{[k]}{2}$ be the set of pairs covered by $D$,
with $|S| = n \leq N$.

\textbf{Formal definition of $\CGC$.}
\[
\CGC(M, D) \;=\;
\Pr_{(f_i, f_j) \sim \mathrm{Uniform}\bigl(\binom{\mathcal{F}}{2}\bigr)}
\!\bigl[\, M(f_i, f_j) = \oplus(f_i, f_j) \,\bigr].
\]

\textbf{Part 1: Retrieval lower bound.}

For a retrieval-based system $\MR$ with store $D$ and frozen model
$f_{\theta_{\mathrm{frozen}}}$, consider a query pair $(f_i, f_j)$:

\begin{itemize}[nosep]
\item If $(i,j) \in S$: the exact composition is stored and retrievable;
accuracy is~1.
\item If $(i,j) \notin S$: the system retrieves $K$ most relevant entries
from $D$ and the frozen model processes them.
This processing includes the LLM's full inferential capacity: attention
over retrieved examples, chain-of-thought reasoning, and any other
in-context computation.
By Assumption~\ref{asm:icl}, accuracy on this query is at most
$\bar{\alpha} < 1$.
\end{itemize}

Therefore:
\begin{align}
\CGC(\MR, D)
&= \frac{n}{N} \cdot 1 + \frac{N - n}{N} \cdot \alpha(K) \notag \\
&\leq \frac{n + (N - n)\bar{\alpha}}{N}. \label{eq:ret_bound}
\end{align}

Setting $\CGC(\MR, D) \geq 1 - \delta$ and solving for $n$:
\begin{align}
n + (N-n)\bar{\alpha} &\geq (1 - \delta)N \notag \\
n(1 - \bar{\alpha}) &\geq (1 - \delta - \bar{\alpha})N \notag \\
n \geq n_R &\triangleq \frac{1 - \delta - \bar{\alpha}}{1 - \bar{\alpha}}
\cdot \binom{k}{2}.
\label{eq:nR}
\end{align}

This bound requires $\delta < 1 - \bar{\alpha}$ (otherwise the frozen model
alone achieves the target without any stored compositions).
When $\bar{\alpha}$ is small (the frozen model cannot compose in this domain),
$n_R \approx (1 - \delta) \binom{k}{2} = \Theta(k^2)$.

\textbf{Note on the strength of the retrieval model.}
The bound in~\eqref{eq:ret_bound} is \emph{not} an artefact of modeling
the retrieval system as a na\"ive lookup table.
The term $\alpha(K)$ explicitly accounts for the frozen model's processing
of retrieved content, including multi-step reasoning and in-context
learning over the $K$ retrieved examples.
The only assumption is that this processing cannot perfectly learn an
out-of-distribution composition operator---i.e.,
$\alpha(K) \leq \bar{\alpha} < 1$---which is an empirically testable
claim about the frozen model's ICL capacity on the target domain.

\textbf{Part 2: Parametric upper bound.}

A parametric system $\MP$ fine-tunes on $D$ to learn $\oplus$ as a function
$\hat{h}: \mathcal{F} \times \mathcal{F} \to \mathcal{Y}$ from a realizable
hypothesis class $\mathcal{H}$ with complexity $d$ (VC dimension for binary
outputs; Natarajan dimension for multiclass outputs).

By the fundamental theorem of statistical learning
theory~\citep{vapnik1998statistical}, for any distribution over
$\mathcal{F} \times \mathcal{F}$ and any $\delta,\beta > 0$, if
$\oplus \in \mathcal{H}$ then
\[
n_P = O\!\left(
\frac{d\log(1/\delta)+\log(1/\beta)}{\delta}
\right)
\]
i.i.d.\ examples suffice for an appropriate learner to satisfy
$\Pr[\hat{h}(f_i, f_j) \neq \oplus(f_i, f_j)] \leq \delta$ with
probability at least $1-\beta$, i.e.\
$\CGC(\MP, D) \geq 1 - \delta$.

Crucially, $n_P$ depends only on $d$ (the intrinsic complexity of $\oplus$)
and $\delta$ (the target accuracy), \emph{not} on $k$ (the number of base
concepts).

\textbf{Part 3: Separation.}

For fixed $\delta,\beta$ and a uniform positive contextual margin, the
sample complexity ratio is:
\[
\frac{n_R}{n_P}
= \frac{\frac{1-\delta-\bar{\alpha}}{1-\bar{\alpha}} \binom{k}{2}}
{O\!\left(
\frac{d\log(1/\delta)+\log(1/\beta)}{\delta}
\right)}
= \Omega\!\left(\frac{k^2}{d}\right).
\]

For structured operators with $d = O(k)$: $n_R / n_P = \Omega(k)$.
For operators with $d = O(1)$ (e.g.\ parameterized group operations):
$n_R / n_P = \Omega(k^2)$.

\textbf{Part 4: Dependence on contextual inference.}

Increasing $K$ or improving prompting and retrieval can raise
$\bar{\alpha}$. For any fixed setting with
$\delta < 1-\bar{\alpha}$, the lower bound remains a constant fraction of
$N=\binom{k}{2}$. If contextual inference instead reaches the target
$1-\delta$, the premise fails and the theorem makes no retrieval lower-bound
claim. The result is therefore conditional on measured contextual accuracy,
not an unconditional statement that larger contexts cannot close the
task-level gap.
\hfill$\square$

\section{Proof of Proposition~1 (Information-Theoretic Bound on ICL Accuracy)}
\label{app:fano}
\label{prop:fano}

\textbf{Part 1: Unstructured operators.}

For $\mathcal{H} = \mathcal{Y}^N$ (the set of all functions from concept
pairs to outputs) with a uniform prior, the values
$\{\oplus(f_i, f_j)\}_{(i,j) \in \binom{[k]}{2}}$ are mutually
independent and each uniformly distributed over $\mathcal{Y}$.
Observing $K$ of these values on distinct pairs reveals nothing about the
remaining $N - K$ values (by independence).
The Bayes-optimal prediction accuracy on any unseen pair is therefore
exactly $1/|\mathcal{Y}|$, so $\bar{\alpha} = 1/|\mathcal{Y}|$.
\hfill$\square$

\textbf{Part 2: Structured operators (corrected Fano reduction).}

Let $V$ index an operator $h_V$ drawn uniformly from a finite family
$\mathcal{H}_0\subseteq\mathcal{H}$ of size $M$, and let
$Z=\{(x_t,h_V(x_t))\}_{t=1}^K$ contain $K$ observed pairs.
Because the inputs are sampled independently of $V$ and each label lies in
$\mathcal{Y}$,
\[
I(V;Z)\leq K\log|\mathcal{Y}|.
\]
Fano's inequality~\citep{cover2006elements} therefore gives, for every
decoder $\widehat V(Z)$,
\[
\Pr(\widehat V\neq V)
\;\geq\;
\eta
\;\triangleq\;
1-\frac{K\log|\mathcal{Y}|+\log 2}{\log M}.
\]
This identification bound alone does \emph{not} imply the previously claimed
uniform prediction margin: distinct posterior hypotheses may disagree on
arbitrarily little probability mass.

Let $U$ be the $N-K$ unobserved pairs and
\[
d_U(h,h')=\frac{1}{N-K}\sum_{x\in U}\mathbf{1}\{h(x)\neq h'(x)\}.
\]
For any predictor $\widehat h_Z$, perfect prediction on all of $U$, together
with the observed labels, identifies $V$. Hence
\[
\mathbb{E}\,d_U(\widehat h_Z,h_V)
\;\geq\;\frac{\eta}{N-K},
\]
a valid but potentially vanishing average-case prediction bound.

A non-vanishing margin requires an explicit separation condition.
Suppose $\mathcal{H}_0$ is a $2\rho$-packing under uniform Hamming distance
on all $N$ pairs:
\[
\frac{1}{N}\sum_x\mathbf{1}\{h(x)\neq h'(x)\}\geq 2\rho
\quad\text{for all }h\neq h'.
\]
Removing the $K$ observed pairs leaves separation at least
$2\epsilon_K=(2\rho N-K)/(N-K)$ on $U$, provided $2\rho N>K$.
Nearest-packing decoding succeeds whenever
$d_U(\widehat h_Z,h_V)<\epsilon_K$.
Markov's inequality and the Fano bound then yield
\[
\mathbb{E}\,d_U(\widehat h_Z,h_V)
\;\geq\;\epsilon_K\eta,
\qquad
\bar{\alpha}\leq 1-\epsilon_K\eta.
\]
Thus Appendix~\ref{app:fano} supplies a uniform positive margin only when
both the packing radius and the Fano term remain positive.
Without this prediction-space separation, Assumption~\ref{asm:icl} must be
measured empirically. This correction affects the auxiliary justification,
not Theorem~\ref{thm:gap}, which takes $\bar{\alpha}$ as an explicit premise.
\hfill$\square$

\section{Comparison of Continual Learning Methods}
\label{app:cl_comparison}

\begin{table}[ht]
\caption{Representative continual learning methods suitable for the offline
consolidation pipeline in Section~\ref{sec:cfa}.
All enable rule-based weight updates from agent experience.}
\label{tab:cl_methods}
\centering
\small
\begin{tabular}{lllll}
\toprule
\textbf{Method} & \textbf{Param cost} & \textbf{Data required} & \textbf{Scale} & \textbf{Ref.} \\
\midrule
Full fine-tuning & $O(d)$ & Original data & $\leq 7$B & \\
LoRA & $O(r{\cdot}d)$, $r \ll d$ & Original data & Any & \citet{hu2022lora} \\
EWC & $O(d)$ & Some original & $\leq 3$B & \citet{kirkpatrick2017ewc} \\
SSR & $O(r{\cdot}d)$ & None (self-gen)& Any & \citet{huang2024ssr} \\
ROME / MEMIT & $O(1)$ per fact & Single fact & Any & \citet{meng2022rome} \\
TTT layers & Session-scoped & Test input & Any & \citet{sun2024tttlayers} \\
\bottomrule
\end{tabular}
\end{table}

\section{Constructive Example and Composition-Error Corollary}
\label{app:alt_gap}

\subsection{Modular arithmetic construction}

We exhibit a concrete family of compositional tasks satisfying
Theorem~\ref{thm:gap}, proving the existence of domains where the
$\Omega(k^2/d)$ separation is realized.

\textbf{Setup.}
Let $p$ be a prime and $k \leq p$.
Define base concepts $\mathcal{F} = \{a_1, \ldots, a_k\}$ as $k$ distinct
elements of $\mathbb{Z}_p$.
Fix a domain-specific constant $c \in \mathbb{Z}_p$ (unknown to both
systems) and define the composition operator:
\[
\oplus(a_i, a_j) = (a_i \cdot a_j + c) \bmod p.
\]

\textbf{Hypothesis class.}
Let $\mathcal{H} = \{h_c : (a, b) \mapsto (ab + c) \bmod p \mid
c \in \mathbb{Z}_p\}$.
A single labeled example $(a_i, a_j, y)$ determines
$c = (y - a_i a_j) \bmod p$, so the VC dimension of $\mathcal{H}$ is
$d = 1$.

\textbf{Applying Theorem~\ref{thm:gap}.}
\begin{itemize}[nosep]
\item \textbf{Parametric:} $n_P = O(1/\delta)$ examples suffice to
identify $c$ and generalize to all $\binom{k}{2}$ pairs.
\item \textbf{Retrieval:} For large $p$, frozen LLMs achieve low
accuracy on modular arithmetic ($\bar{\alpha} \ll 1$).
The retrieval system requires
$n_R = \Omega(k^2)$ stored compositions.
\item \textbf{Separation ratio:} $n_R / n_P = \Omega(k^2)$---quadratic
in the number of domain concepts.
\end{itemize}

\textbf{Naturalistic instantiations.}
The modular arithmetic construction is a formal existence proof; the same
structure arises whenever a domain has many base concepts with a
structured composition rule:
\begin{itemize}[nosep]
\item \emph{Clinical:} drugs $\times$ conditions $\to$ interaction outcomes.
The composition rule (pharmacological interaction) is domain-specific
and post-dates most pretraining corpora.
\item \emph{Legal:} precedents $\times$ statutes $\to$ case outcomes.
The composition rule requires domain expertise absent from general
pretraining.
\item \emph{Engineering:} materials $\times$ loads $\to$ failure modes.
The composition rule is governed by domain equations not reliably
encoded in language model weights.
\end{itemize}
In each case, retrieval can surface the individual components (drug
profiles, precedent summaries, material properties), but the
composition rule must still be supplied, induced in context, or learned in
persistent adaptive state. Assumption~\ref{asm:icl} measures the success of
the complete contextual route rather than ruling induction out.

\subsection{Composition-error corollary}

Let $\varepsilon_C=1-\alpha(K)$ denote the measured error of the
\emph{complete} frozen contextual system on held-out compositions, including
retrieval, prompting, and rule induction.

\begin{proposition}[Conditional error-margin formulation]
\label{prop:altgap}
If $\varepsilon_C>\delta$ and an adapted learner satisfies the realizability
and sample conditions of Theorem~\ref{thm:gap}, then with probability at least
$1-\beta$:
\[
\Acc(\MR,\,\mathcal{T}_{\mathrm{novel}})
\;\leq\;1-\varepsilon_C
\;<\;
1-\delta
\;\leq\;\Acc(\MP,\,\mathcal{T}_{\mathrm{novel}}).
\]
\end{proposition}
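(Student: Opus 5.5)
The chain of inequalities splits into three links, and I will establish them separately before combining them with a single probability statement.

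\emph{Left inequality.} By definition, $\varepsilon_C=1-\alpha(K)$ is the measured error of the complete frozen contextual system on held-out compositions, where $\mathcal{T}_{\mathrm{novel}}$ denotes the distribution of such held-out pairs. Hence $\Acc(\MR,\mathcal{T}_{\mathrm{novel}})=\alpha(K)$. This should be read as an upper bound $\alpha(K)\le 1-\varepsilon_C$, because $\varepsilon_C$ is defined as a property of the whole retrieval-plus-inference route, exactly as Assumption~\ref{asm:icl} and Part~1 of the proof of Theorem~\ref{thm:gap} treat $\alpha(K)$. No coverage term enters here, since the pairs are novel and therefore not in $S$. This step is deterministic, so it holds with probability one.

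\emph{Middle inequality.} This is strict and immediate from the hypothesis $\varepsilon_C>\delta$, which gives $1-\varepsilon_C<1-\delta$. It is the error-margin analogue of the condition $\delta<1-\bar\alpha$ in Theorem~\ref{thm:gap}.

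\emph{Right inequality.} I invoke Part~2 of Theorem~\ref{thm:gap}. Under realizability $\oplus\in\mathcal{H}$ and with $n\ge n_P$ examples, the appropriate learner achieves population error at most $\delta$ with probability at least $1-\beta$. The main obstacle is that Part~2 bounds error under the sampling distribution, here uniform over $\binom{\mathcal{F}}{2}$, whereas $\mathcal{T}_{\mathrm{novel}}$ consists of pairs outside $S$. I would handle this in one of two ways:
\begin{itemize}
\item define $\mathcal{T}_{\mathrm{novel}}$ as the held-out test distribution, identified with the learner's evaluation distribution as in the CGC definition, and apply the PAC guarantee directly to it; or
\item if it must be restricted to unseen pairs, use $\Pr_{\mathrm{unif}}[\hat h\ne\oplus]\ge \frac{N-n}{N}\Pr_{\mathrm{novel}}[\hat h\ne\oplus]$, and run the PAC bound at accuracy $\delta(N-n)/N$. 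This costs only a constant factor when $n\le N/2$, and so stays within the $O(\cdot)$ of Part~2.
\end{itemize}
I would first try the first reading, since the statement says "the sample conditions of Theorem~\ref{thm:gap}." I would add the second as a remark so that the result also covers the strictly-unseen interpretation.

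Finally, I intersect the deterministic left and middle links with the probability-$(1-\beta)$ event from the right link, which yields the full chain with probability at least $1-\beta$.
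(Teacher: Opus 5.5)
\textbf{Gap in your preferred route for the right inequality.} Your three-link split matches the paper, which calls the proposition a restatement of the theorem's premises: the left link follows from $\varepsilon_C=1-\alpha(K)$, the middle from the hypothesis, and the right from Part~2. The problem is your ``first reading'' of the right link, because it contradicts your own left link.

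Suppose $\mathcal{T}_{\mathrm{novel}}$ is the CGC distribution, i.e.\ uniform over all of $\binom{\mathcal{F}}{2}$. Then by the equality line in Part~1 of the paper's proof,
$\Acc(\MR,\mathcal{T}_{\mathrm{novel}})=\CGC(\MR,D)=\tfrac{n}{N}+\bigl(1-\tfrac{n}{N}\bigr)\alpha(K)$.
This is strictly larger than $\alpha(K)=1-\varepsilon_C$ once $n\geq 1$, since $\varepsilon_C>\delta\geq 0$ forces $\alpha(K)<1$. So under that reading the left inequality is false. A single chain cannot score $\MR$ only on unseen pairs and $\MP$ on all pairs. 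The first reading could be rescued only by redefining $\varepsilon_C$ as $\MR$'s error on the full CGC distribution, coverage included. That contradicts $\varepsilon_C=1-\alpha(K)$ and your own remark that ``no coverage term enters.''

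\textbf{Your second bullet is the needed step.} The only reading consistent with the paper's $\varepsilon_C$ puts $\mathcal{T}_{\mathrm{novel}}$ on pairs outside $S$, so the second bullet is required, not an optional remark. The bound $\Pr_{\mathrm{unif}}[\hat h\neq\oplus]\geq\frac{N-n}{N}\Pr_{\mathrm{novel}}[\hat h\neq\oplus]$ holds for every realized $D$, so it combines cleanly with the probability-$(1-\beta)$ event. Running Part~2 at accuracy $\delta(N-n)/N$ then stays within the $O(\cdot)$ sample condition when $n\leq N/2$, and otherwise costs a factor $N/(N-n)$. The paper never supplies this transfer; it applies Part~2's uniform-distribution guarantee directly to $\mathcal{T}_{\mathrm{novel}}$. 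So once you make the second bullet your main argument, your proof is more careful than the paper's.
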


\textbf{Relationship to Theorem~\ref{thm:gap}.}
This is a restatement of the theorem's premises, not independent evidence for
them. If $\varepsilon_C\leq\delta$, the proposition abstains and contextual
memory is sufficient for the target.


\end{document}